\definecolor{ForestGreen}{rgb}{0.13, 0.55, 0.13}
\newcommand{\cmark}{{\color{ForestGreen} \ding{51}}}%
\newcommand{\xmark}{{\color{red} \ding{55}}}%
\newcommand{\citet}[1]{\cite{#1}}
\useunder{\uline}{\ul}{}
\newcommand{\vect}[1]{\boldsymbol{#1}}
\newcommand{\Label}{\mathcal{L}}
\newcommand{\Labeld}{\mathcal{L}_{\text{d}}}
\newcommand{\Labelc}{\mathcal{L}_{\text{c}}}
\newcommand{\R}{\mathbb{R}}
\newtheorem{theorem}{Proposition}
\theoremstyle{definition}
\begin{document}
\title{Efficient and Flexible Sublabel-Accurate \\Energy Minimization}

\author{\IEEEauthorblockN{Zhakshylyk Nurlanov\IEEEauthorrefmark{1}\IEEEauthorrefmark{2} $\qquad$
Daniel Cremers\IEEEauthorrefmark{2} $\qquad$
Florian Bernard\IEEEauthorrefmark{1}\IEEEauthorrefmark{2}}
\IEEEauthorblockA{\IEEEauthorrefmark{1}University of Bonn $\qquad$ \IEEEauthorrefmark{2}Technical University of Munich}
}

\maketitle

\begin{abstract}
We address the problem of minimizing a class of energy functions consisting of data and smoothness terms that commonly occur in machine learning, computer vision, and pattern recognition. While discrete optimization methods are able to give theoretical optimality guarantees, they can only handle a finite number of labels and therefore suffer from label discretization bias. Existing continuous optimization methods can find sublabel-accurate solutions, but they are not efficient for large label spaces. In this work, we propose an efficient sublabel-accurate method that utilizes the best properties of both continuous and discrete models. We separate the problem into two sequential steps: (i) global discrete optimization for selecting the label range, and (ii) efficient continuous sublabel-accurate local refinement of a convex approximation of the energy function in the chosen range. Doing so allows us to achieve a boost in time and memory efficiency while practically keeping the accuracy at the same level as continuous convex relaxation methods, and in addition, providing theoretical optimality guarantees at the level of discrete methods. 
Finally, we show the flexibility of the proposed approach to general pairwise smoothness terms, so that it is applicable to a wide range of regularizations. Experiments on the illustrating example of the image denoising problem demonstrate the properties of the proposed method. The code reproducing experiments is available at \url{https://github.com/nurlanov-zh/sublabel-accurate-alpha-expansion}.
\end{abstract}
\footnotetext{To be published at ICPR 2022, \copyright{} 2022 IEEE}

\IEEEpeerreviewmaketitle

\section{Introduction}
\begin{figure}[t!]
\centering
\captionsetup[subfigure]{justification=centering}
\footnotesize
\begin{subfigure}[t]{\columnwidth}\centering
    \includegraphics[width=0.93\columnwidth]{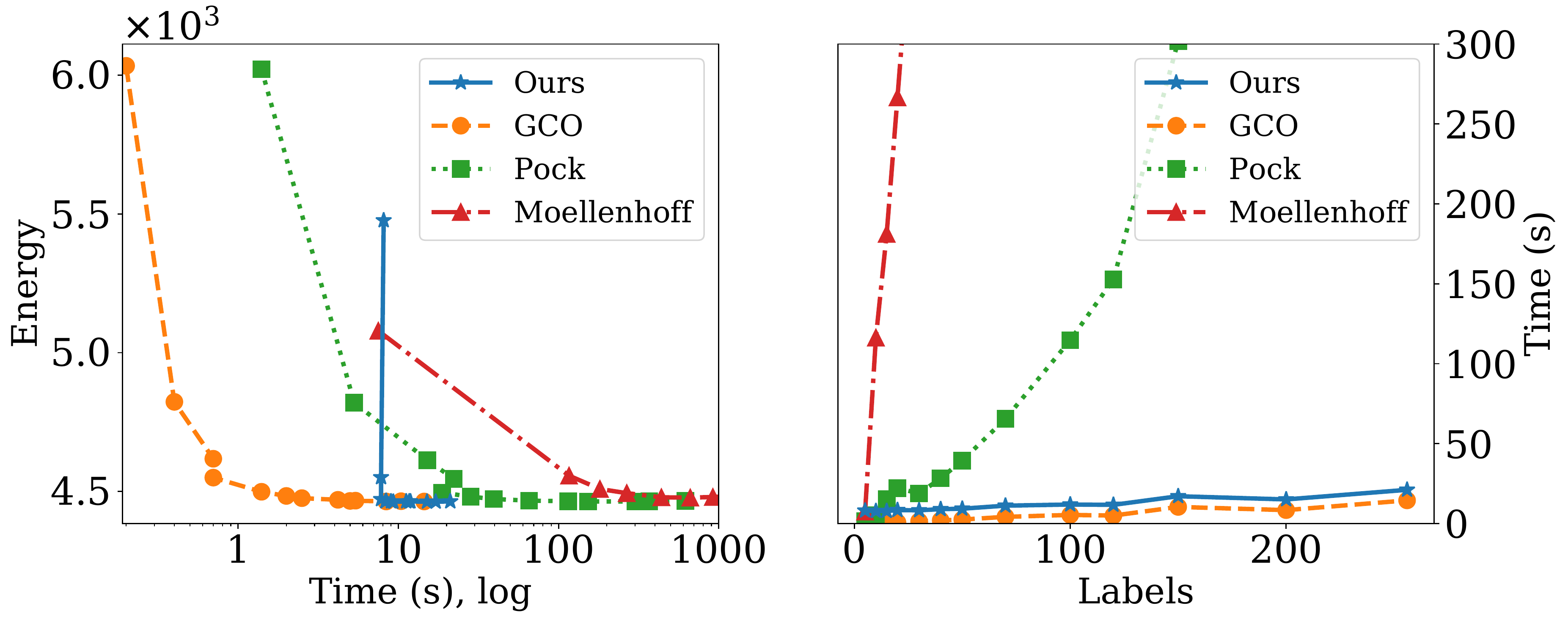}
    \end{subfigure} 
\begin{tabular*}{\columnwidth}{p{16mm}p{16mm}p{16mm}p{16mm}}
    \begin{subfigure}[t]{0.22\columnwidth}\centering
    \includegraphics[width=\columnwidth]{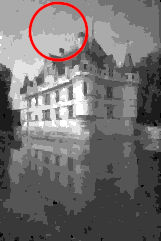}
    \caption*{Pock\\(scalable \xmark,\\sublabels \cmark)}
    \label{fig:teaser_Pock}
    \end{subfigure} &
    \begin{subfigure}[t]{0.22\columnwidth}\centering
    \includegraphics[width=\columnwidth]{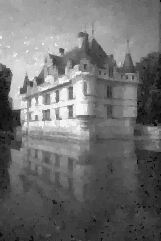}
    \caption*{Mollenhoff\\(scalable \xmark,\\sublabels \cmark)}
    \label{fig:teaser_Moellenhoff}
    \end{subfigure} &
    \begin{subfigure}[t]{0.22\columnwidth}\centering
    \includegraphics[width=\columnwidth]{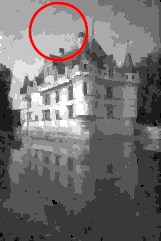}
    \caption*{GCO\\(scalable \cmark,\\sublabels \xmark)}
    \label{fig:teaser_GCO}
    \end{subfigure} &
    \begin{subfigure}[t]{0.22\columnwidth}\centering
    \includegraphics[width=\columnwidth]{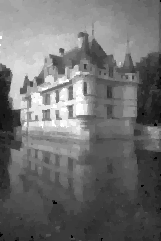}
    \caption*{\textbf{Ours}\\(scalable \cmark,\\sublabels \cmark)}
    \label{fig:teaser_ours}
    \end{subfigure} \\
\end{tabular*}
\caption{Comparison of the methods on image denoising problem. {In both of the top plots each point for a method represents a fixed number of labels 5, 10, $\ldots$, 256.
Continuous optimization methods by Pock et al.\cite{Pock2010} and Moellenhoff et al.~\cite{Mollenhoff2016} solve the problems of choosing the optimal label range and sublabel-accurate refinement simultaneously, which makes them poorly scalable to large label spaces (see top right plot). In contrast, we propose to solve the two problems sequentially, which significantly reduces the computational costs producing almost optimal results much faster than previous continuous methods (see top left plot in logarithmic time). The discrete initialization of our method (GCO~\cite{Boykov2001}) scales best w.r.t. number of labels, but suffers from label discretization bias.} Our sublabel-accurate refinement substantially reduces label discretisation artefacts (see images on the bottom for the case of $|\Label|=10$ labels), takes constant time w.r.t. label space discretization, and requires fewer number of labels than discrete method to reach plausible results.
} 
\label{fig:teaser}
\end{figure}

Many problems in machine learning, computer vision, and pattern recognition can be formulated as energy minimization over mappings $u: \Omega \rightarrow \Label$ between sets $\Omega$ and $\Label$. The energy function is constructed in such a way that the minimizing mapping constitutes certain desirable properties. Typically, such energies comprise two components, data and smoothness terms. The data term is associated with the observed data and encourages the mapping $u$ to describe the data optimally. The smoothness term is associated with some priors of the desired mapping, such as spatial smoothness or other regularizations. Hence, a respective energy minimization problem has the form
\begin{equation}
\label{eq:basic_energy}
\min\limits_{u: \Omega \rightarrow \Label} E_{\text{data}}(u) + E_{\text{smooth}}(u).
\end{equation}
Depending on the structure of the \textit{domain set} $\Omega$ and the \textit{label set} $\Label$, the optimization problem (\ref{eq:basic_energy}) can be divided into different classes. In this work we are interested in \emph{discrete-continuous} optimization problems, where the domain is represented by a discrete set of nodes $\Omega = \{1, \ldots, N\}$, and the labels of the nodes can be assigned from a \emph{continuous} set,~e.g.~{$\Label = \R$}. Hence, the mapping $u: \Omega \rightarrow \Label$ is uniquely defined by the values in $\Label$ at $N$ discrete points of $\Omega$, in other words, the mapping $u$ can be represented by the $N$-dimensional vector $\vect{u} = (u^1,\ldots, u^N) \in \Label^N$. 

In this work, we consider the Problem~\eqref{eq:basic_energy} in the following form,
    \begin{equation}
    \label{eq:discrete_energy}
    \min\limits_{\vect{u} \in \Label^N} \sum\limits_{i=1}^{N} E_{i}({u}^i) + \sum\limits_{(i, j) \in \mathcal{E}} E_{ij}({u}^i, {u}^j),
    \end{equation}
where the $E_i$ denotes the individual unary data term, and the $E_{ij}$ denotes the individual pairwise term that is defined for pairs of nodes $(i,j)$ sharing an edge in the set of edges~$\mathcal{E}$. 
Such energy formulations have a high relevance for various tasks in computer vision or pattern recognition, including image completion~\cite{komodakis2006image}, image denoising~\cite{roth2005fields}, stereo and multi-view reconstruction~\cite{kolmogorov2001computing, kolmogorov2002multi}, or optical flow estimation~\cite{Glocker2008}. 
In this context, the nodes often represent pixels (or a group of pixels) of an image, and the labels are target values from a continuous space. Moreover, the discrete domain set $\Omega$ allows to model arbitrary graph labeling problems without any restrictions on the underlying space of graph nodes so that diverse problems beyond computer vision can be addressed~\cite{Bernard2017a, paulsen2009markov, hinton2005learning, xiong2020general, nguyen2019fake}.

\section{Related work}
Since our work utilizes ideas both from fully-discrete MRF optimization, as well as from
sublabel-accurate (continuous) methods, relevant ideas from both worlds will be discussed. 

\subsection{Discrete energy minimization}
The problem formulation in a \textit{fully-discrete} setting is the most studied one in the MRF community. There are diverse variants of solutions in this direction, including belief propagation~\cite{Yedidia2001NIPS, Weiss2001}, continuous relaxations~\cite{Kannan2020}, or 
graph cuts~\cite{Boykov2001, Kolmogorov2004}. In general, the optimization of \eqref{eq:discrete_energy} is an NP-hard problem~\cite{SHIMONY1994399, Li2016ECCV}. However, for specific energy functionals, the problem can be solved efficiently. For instance, arbitrary unary potentials and convex pairwise terms are sufficient to solve Problem~\eqref{eq:discrete_energy} exactly~\cite{Ishikawa2003, schlesinger2006transforming}, or for truncated convex pairwise terms a solution with tight approximation guarantee can be found~\cite{Veksler2007}.

Similarly, the {$\alpha$-expansion} algorithm~\cite{Boykov2001} utilizes graph-cuts to solve Problem~\eqref{eq:discrete_energy} with approximation guarantees in the case of arbitrary data and \emph{metric} pairwise terms. The FastPD algorithm is a generalization of $\alpha$-expansion~\cite{Komodakis2008} to \emph{semi-metric} smoothness functionals based on a primal-dual strategy. Both of the methods are known to be extremely efficient while still guaranteeing almost optimal solutions~\cite{Szeliski2006, Nieuwenhuis2013}.
The energy functionals with more general non-convex priors (pairwise terms) are considered in the work IRGC \cite{Ajanthan_2015_CVPR}. To overcome the issue with non-convexity, it is proposed to iteratively approximate the original energy with an appropriately weighted surrogate energy that is easier to minimize. The surrogate problem at each iteration can be solved efficiently with graph cuts~\cite{Boykov2001,Ishikawa2003}. 

A major downside of the mentioned discrete approaches is that they are tailored towards finite label spaces, and that limits the applicability of such models in the context of continuous label spaces $\Label$.

\subsection{Sublabel accuracy} 
For many practical problems, sublabel accuracy is achieved by iteratively discretizing a continuous label space in a coarse-to-fine manner, and solving discrete MRFs repeatedly. For example, \citet{Bernard2017a}  discretizes the large continuous label space, the Lie group of rigid-body motions $SE(3)$, in million points at each iteration of a coarse-to-fine approach. So each iteration of the method starts with the solution from the previous iteration. 
Although this approach makes it possible to solve problems with large label spaces efficiently, it suffers from label discretization bias.

An alternative is to consider sublabel-accurate continuous approaches, which are able to assign labelings that lie \emph{in-between} discrete labels.
The problem of sublabel-accuracy in \textit{fully-continuous} optimization was explicitly tackled in the work by Moellenhoff et al.~\citet{Mollenhoff2016}. The authors propose a functional lifting approach for a piece-wise convex approximation of a data term and a fixed total variation (TV) pairwise term.
The approximation is made either via piecewise linear~\cite{Pock2010} or piecewise quadratic functions or calculated with the help of convex envelopes. Follow-up works  generalize this approach to vectorial energies~\citet{Laude2016}, total generalized variation~\citet{strecke2018sublabel}, or more general pairwise terms~\citet{Mollenhoff2017}.

The discrete domain and continuous label space are considered in discrete-continuous sublabel-accurate methods~\citet{Zacha,Zach2017,Fix2014}. 
Interestingly, the discrete-continuous method \cite{Fix2014}, the generalization of \cite{Zacha}, clearly corresponds to the continuous approach~\citet{Mollenhoff2017}, but on discrete domain. In both of the approaches, the authors work with the dual formulation of the convex relaxation and showcase the relationship between the choice of representation of dual variables and the convexification of primal energies. Therefore, these methods share the same computational complexity and lack of optimality guarantees for common functionals.

As illustrated in Fig.~\ref{fig:teaser}, the computational cost of the sublabel-accurate approaches grows \emph{dramatically} with the increase in the number of label ranges, i.e.~the label space discretization. This is mainly because these convex relaxation methods have to jointly solve two problems: (i) the choice of the label range, and (ii) the sublabel-accurate convex optimization on the chosen interval. For problems with large label spaces that rely on a precise discretization, these methods become computationally infeasible~\cite{Zacha}. Another drawback of existing sublabel-accurate methods is that they do not provide optimality guarantees in cases of nontrivial (nonlinear) data term approximations, not even for problems in which analogous discrete approaches guarantee optimality. For example, Problem~\eqref{eq:discrete_energy} with arbitrary data cost and convex pairwise cost can be solved exactly via discrete optimization~\cite{Ishikawa2003, schlesinger2006transforming}, however, the sublabel-accurate methods with non-linear data term approximation~\cite{Zacha, Mollenhoff2016} do not guarantee the optimality and do not even reach it on practice~\cite{Zacha}.

\textbf{Our contribution }
In this work, we address the shortcomings of existing works related to discretization bias and infeasible computational costs in large label spaces. To this end, we propose an efficient and sublabel-accurate algorithm for continuous-valued MRFs of the form in Problem~\eqref{eq:discrete_energy}.
We summarize our  main contributions as follows:
\begin{itemize}
    \item We propose a method for sublabel-accurate optimization of continuous-valued MRFs that does not suffer from label discretization bias while preserving the optimality guarantees of discrete models.
    \item Our method is orders of magnitude more efficient compared to previous approaches~\cite{Zacha,Pock2010, Mollenhoff2016}, both in terms of time and memory consumption, while at the same time producing solutions of similar or better quality. 
    \item The choice of data- and pairwise-costs is widely flexible and does not affect the efficiency of our method, in contrast to previous sublabel-accurate methods.
    \item Our approach is especially well-suited for problems with very large label spaces and a fine discretization.
    \item As an illustrating example, for the problem of image denoising we experimentally demonstrate that our method gives high-quality and sublabel-accurate results while requiring less time compared to competitors, see Fig.~\ref{fig:teaser}.
\end{itemize}

\section{Our Sublabel-Accurate Energy Minimization}

\begin{algorithm}[hbt!]
\caption{Proposed sublabel-accurate energy minimization}
\label{alg:proposed}
\begin{algorithmic}[1]
\REQUIRE{Continuous label space $\Labelc$; its discretization in a finite number of points $\Labeld = \{\ell_1, \ldots, \ell_L\} \subset \Labelc $; the values of functions $E_i, E_{ij}$ from Problem~\eqref{eq:discrete_energy}} in $\Labeld$ and $\Labeld \times \Labeld$
\ENSURE{vector $\vect{u} \in \Labelc^N$} which minimizes \eqref{eq:discrete_energy}
 \STATE $\vect{u}_\text{d} \leftarrow D[\{E_i(\Labeld)\}, \{E_{ij}(\Labeld)\}; \Labeld]$;  \quad 
 \textit{\# $D$ is an arbitrary discrete solver}
 \STATE $\Gamma \leftarrow$ Neighborhoods($\vect{u}_\text{d}$);  \quad \quad \quad  
 \textit{\# $\Gamma \subset \Labelc^N$-- label range} 
 \STATE Approximate $E_i, E_{ij}$ in-between labels $\Labeld$ on $\Gamma$ with one of the methods \eqref{eq:LP_local_relax_MRF}, \eqref{eq:QP_local_relax_MRF}, \eqref{subeq:Full_local_relax_constraint_1};
 \STATE $\vect{u} \leftarrow$ Solve the corresponding convex problem;
 \RETURN $\vect{u}$
\end{algorithmic}
\end{algorithm}

We propose to solve the difficult continuous labeling problem \eqref{eq:discrete_energy} by dividing it into two easier subproblems: (i) discrete label range selection and (ii) continuous sublabel-accurate optimization on the chosen label range. In the first step, we utilize an off-the-shelf discrete MRF solver.
In the second step, we solve the continuous convex problem with the convex
approximation of the original (potentially non-convex) energy on the selected label intervals.
The individual label intervals are defined as the neighborhoods (closed balls) of the initial discrete solution. The {convex} optimization problem has a constant time and memory complexity w.r.t. the label space discretization. Thus, together with the efficient discrete solver, our overall method becomes highly scalable to large continuous label spaces. The proposed approach is summarized in Algorithm \ref{alg:proposed}.

In the following, we first summarize the existing discrete methods, discuss their properties and use cases. Next, we explain the proposed continuous local refinement in detail.

\subsection{Discrete energy minimization in a nutshell}
\label{section:discrete_optimization}
We assume that the Problem \eqref{eq:discrete_energy} is given by the finite number of function values $E_i, E_{ij}$. In other words, the functions $E_i, E_{ij}$ are only evaluated on the discretization ${\Labeld = \{\ell_1, \ldots, \ell_L\} \subset \Labelc}$ of the continuous label space $\Labelc$.

There are many efficient discrete algorithms for common energy functionals. For example, the problems with arbitrary data cost $E_i$ and convex priors $E_{ij}$ can be efficiently solved to the \emph{exact} optimum via graph-cuts~\cite{Ishikawa2003}. If the constraint on pairwise terms is relaxed to metric functions, the problem can still be efficiently solved via graph-cut optimization algorithm, $\alpha$-expansion or \emph{GCO}, with tight optimality guarantee~\cite{Boykov2001},~\cite{Veksler2007},~\cite{Komodakis2008}. More general priors, such as semi-metric or non-convex functions, can also be tackled with approximately optimal algorithms FastPD~\cite{Komodakis2008} and IWGC~\cite{Ajanthan_2015_CVPR} correspondingly.

In this work, we propose to use $\alpha$-expansion (GCO) method as a default discrete solver, because it is known to be extremely efficient \cite{Szeliski2006} on many practical problems while giving almost optimal results. Moreover, the $\alpha$-expansion consumes constant memory w.r.t. label space discretization. It iterates over expansion moves, and the single move requires $O(N)$ memory, where $N = |\Omega|$ is the number of nodes. So the whole algorithm also takes $O(N)$ memory and does not depend on label space discretization $L=|\Labeld|$.

To emphasize that the choice of the discrete optimization algorithm is 
flexible and can be adjusted to the considered problem, we also use the exact discrete solver~\cite{Ishikawa2003,schlesinger2006transforming} in the experiments. It constructs graph-cut problem for all labels simultaneously, so the method requires $O(N L)$ memory.

\subsection{Proposed continuous refinement}
\label{section:continuous_refinement}
Assuming that the label space is discretized by a finite set of $L$ labels $\Labeld = \{\ell_1, \ldots, \ell_L\} \subset \Labelc$,
we define the \emph{marginalization set}
\begin{align}
    \mathcal{M}  := \bigg\{ &\vect{\phi} \in \R^Z ~\big|~ \sum_{\ell \in \Labeld} \phi_i(\ell) = 1 \quad \forall ~i \in \Omega, \label{eq:marginalization_all} \\
    & \sum_{m \in \Labeld} \phi_{ij}(\ell, m) = \phi_i(\ell) \quad \forall~ (i,j) \in \mathcal{E}, \ell \in \Labeld, \nonumber\\
    & \sum_{\ell \in \Labeld} \phi_{ij}(\ell, m) = \phi_j(m) \quad \forall ~(i,j) \in \mathcal{E}, m \in \Labeld \nonumber
    \bigg\}.
\vspace{-1mm}
\end{align}
Here, the bold $\vect{\phi} \in \R^Z$ denotes the stack of all unary $\phi_i(\ell)$ and all pairwise variables $\phi_{ij}(\ell, m)$. Based on the marginalization set, the discrete Problem~\eqref{eq:discrete_energy} with $\Label=\Labeld$ can be expressed as an integer {linear}  program (ILP)~\cite{Savchynskyy2020}, which reads as
\begin{align}
&\min\limits_{\vect{\phi} } && \sum_{i \in \Omega, \ell \in \Labeld} E_i(\ell) \phi_i(\ell) + \sum_{i\sim j, \ell, m}  E_{ij}(\ell, m)\phi_{ij}(\ell, m) \notag \\
 &~\quad\text{s.t. } && \quad\vect{\phi} \in \mathcal{M} \cap \{0,1\}^Z.
 \label{eq:discrete_MRF}
\vspace{-2mm}
\end{align}

A common way to address Problem~\eqref{eq:discrete_MRF} is based on a continuous LP-relaxation~\cite{Kannan2020}, which relaxes the binary constraint 
    $ \vect{\phi} \in \mathcal{M} \cap [0,1]^Z.$
The resulting labeling $\vect{u}$ can be obtained as the expectation 
   $ \vect{u} \in \Labelc^N,  {u^i = \sum_{\ell \in \Labeld} \phi_i(\ell) \cdot \ell}. $
In general, the LP relaxation is not tight and is not efficient. The number of continuous variables $\vect{\phi} \in \mathbb{R}^Z$ is equal to ${Z = Z_{\text{LP}}:= NL + |\mathcal{E}|L^2}$. So for the grid structured graph the memory consumption will be $O(NL^2)$.

\subsubsection{Linear data term with Marginalization constraints (LM)}
\label{subsection:local_MRF_relaxation}
To address the issues of LP-relaxation, we propose to add a local sparsity-aware constraint 
    $\vect{\phi} \in \mathcal{S}$,
where $\mathcal{S}$ imposes the predefined sparsity of $\vect{\phi}$ due to the selected label range ${\Gamma = \Gamma_1 \times \cdots \times \Gamma_N} \subset \Labelc^N$,
\begin{align}
    \mathcal{S} = \big\{ \vect{\phi} \in \R^{Z_\text{LM}} ~|~ 
    &\phi_i(\ell) = 0~\forall ~ \ell \in \Labeld \setminus \Gamma_i, \\
    &\phi_{ij}(\ell, m) = 0~ \forall~ \ell \in \Labeld \setminus \Gamma_i, m \in \Labeld \setminus \Gamma_j \big\}. \nonumber
\end{align}
The label range $\Gamma$ is induced by the initial discrete solution ${\vect{u}_{\text{d}} \in \Labeld^N}$, ${\Gamma_i = [u^i_{\text{left}}, u^i_{\text{right}}] \ni u^i_{\text{d}}}$. 
With that, the marginalization constraints become less flexible, thus significantly tighter. Overall, our \emph{local LP-relaxation} reads
\begin{align}\label{eq:LP_local_relax_MRF}
&\min\limits_{\vect{\phi} } && \sum_{i \in \Omega, \ell \in \Labeld} E_i(\ell) \phi_i(\ell) + \sum_{i\sim j, \ell, m}  E_{ij}(\ell, m)\phi_{ij}(\ell, m) \notag \\
 &~\quad\text{s.t. } && \quad\vect{\phi} \in \mathcal{M} \cap [0,1]^Z \cap \mathcal{S}, \tag{LM}
\vspace{-1mm}
\end{align}

Moreover, with the proposed sparsifying constraint only three variables, namely $\phi_i(u^i_{\text{left}}), \phi_i(u^i_{\text{d}}), \phi_i(u^i_{\text{right}})$, are not constant zeros for the node $i$. Thus, the total number of non-zero variables becomes $Z_{LM} := 3N + 3^2 |\mathcal{E}|$. 
It means that together with the GCO initialization ($O(N)$~memory cost), it leads to the total memory requirement of $O(N)$ for grid structured graphs for the whole proposed sublabel-accurate method (GCO+LM). 

However,  as it is shown in the results of Pock et al.~\citet{Pock2010} in Fig.~\ref{fig:teaser} and Table~\ref{table:discretized_energies}, the \emph{linear} approximation of energy function often leads to integer (i.e.~non-sublabel) solutions.  

\subsubsection{Quadratic data with Marginalization constraints (QM)}
\label{subsection:local_conv_data}
An alternative to the linear approximation of the energy in-between discrete labels is a local \emph{convex non-linear} approximation. For the sake of simplicity, we consider the data and pairwise terms independently, similar to~\citet{Mollenhoff2016}.

For each node $i$ we approximate the data cost $E_i$ by fitting the convex quadratic function $Q_i: \Gamma_i \rightarrow \R$. If the data cost is convex on the given label range, the approximation contains all three discrete points of the data term $E_i(\Gamma_i)$. If the data term $E_i$ is non-convex on $\Gamma_i$, a linear approximation containing the initial discrete point $E_i(u^i_\text{d})$ is used.
Thus, it is guaranteed that the continuous approximation of the discrete data term contains the initial energy-labeling point. 

The pairwise term is connected via marginalization constraint, similar to \eqref{eq:LP_local_relax_MRF}. We introduce the variable ${X \in \Gamma}$  with the additional coupling constraints ${\sum_{\ell \in \Gamma_i} \phi_i(\ell) \cdot \ell = X_i}$ for all $i \in \Omega$. With that, the total number of non-zero variables remains $O(N)$. 
Overall, the new formulations reads
\begin{align}\label{eq:QP_local_relax_MRF}
&\min\limits_{\vect{\phi}, X } && \sum_{i \in \Omega}  Q_i(X_i) \phi_i(\ell) + \sum_{\mathclap{\substack{i\sim j\\\ell \in \Gamma_i, m \in \Gamma_j}}}  E_{ij}(\ell, m)\phi_{ij}(\ell, m) \tag{QM} \\
 &~\quad\text{s.t. } && \vect{\phi} \in \mathcal{M} \cap [0,1]^Z \cap \mathcal{S},  X_i \in \Gamma_i,\notag
\sum_{\ell \in \Gamma_i} \phi_i(\ell) \cdot \ell = X_i. 
\end{align}

\subsubsection{Quadratic data and Linear pairwise terms (QL)}
In previous formulation we did not make any assumptions regarding the pairwise term $E_{ij}$.
However, as shown in~\cite{Ishikawa2003,Mollenhoff2016,Zacha,Fix2014}, restricting the structure of the pairwise terms can further reduce memory and computational costs. That is why in this formulation we assume that the pairwise energy ${E_{ij}: \Gamma_i \times \Gamma_j \rightarrow \R_{\geq 0}}$  is a convex kernel ${k_{ij}(\ell_1, \ell_2)} = {k_{ij}(|\ell_1 - \ell_2|)}$ on the given label range $\Gamma_i \times \Gamma_j$. Moreover, for computational benefits we assume that it is a linear kernel ${k_{ij}^{l_1}(\ell_1, \ell_2)} = \kappa_{ij} \cdot |\ell_1 - \ell_2|$, 
where $\kappa_{ij} \geq 0$ is a fitted constant for each edge $(i, j) \in \mathcal{E}$.

In the case of an L1 total variation (TV) regularizer, {the linear kernel approximation (denoted QL) becomes an exact approximation of the pairwise term, and it gives rise to an efficient linear programming formulation~\cite{boyd2004convex, Lofberg2004}.} 
In summary, we obtain the  convex optimization problem
\begin{subequations}\label{eq:Full_local_relax_MRF}
\begin{align}
\min\limits_{X}& \sum_{i} Q_i(X_i) + \sum_{i\sim j} k_{ij}(|X_i  - X_j|) \notag \\
\text{s.t. } & \quad X_i \in \Gamma_i. \tag{QL} \label{subeq:Full_local_relax_constraint_1}
\end{align}
\vspace{-1mm}
\end{subequations}
In this formulation, we locally approximated the data and pairwise terms on the given label ranges $\Gamma$ and $\Gamma \times \Gamma$. Thus, the problem contains ${Z_{\text{QL}} := N}$ variables, which directly form the desired sublabel-accurate labeling, i.e. $\vect{u} = X$.

\textbf{Summary }
To sum up, all three sublabel-accurate refinement formulations~\eqref{eq:LP_local_relax_MRF}, \eqref{eq:QP_local_relax_MRF}, \eqref{subeq:Full_local_relax_constraint_1} are built on the selected label ranges ${\Gamma = \Gamma_1 \times \cdots \times \Gamma_N} \subset \Labelc^N$, which in turn depend on the initial discrete solution $\Gamma = \Gamma(\vect{u}_\text{d})$. Also, each of the convex problems requires only $O(N)$ number of variables. 
Therefore, the whole proposed pipeline initialized with the efficient discrete solver GCO requires only $O(N)$ memory.
For comparison, the memory costs for previous sublabel-accurate algorithms \cite{Pock2010,Mollenhoff2016,Zacha} are ${O(NL)}$.

As it was shown in~\cite{Pock2010,Mollenhoff2016} the linear energy approximation~~\eqref{eq:LP_local_relax_MRF} often leads to non-sublabel-accurate results. 
\eqref{eq:QP_local_relax_MRF} and \eqref{subeq:Full_local_relax_constraint_1} approximate the data term $E_i$ via locally convex quadratic functions, and that often leads to sublabel-accurate results. However, it is worth noting that in the case of non-convex data terms, the approximation becomes linear and the solutions do not enjoy the sublabel improvement. 
The problem~\eqref{eq:QP_local_relax_MRF} allows a flexible choice of pairwise terms $E_{ij}$. The~\eqref{subeq:Full_local_relax_constraint_1} formulation restricts the pairwise functions to be convex or even linear, which makes it less flexible, but more computationally efficient.

\textbf{Theoretical optimality guarantees } Following, we formalize the optimality properties of the proposed method.
\begin{theorem}[Optimality preservation]
\label{proposition:optimality_preservation}
Let ${\Labeld = \{\ell_1, \ldots, \ell_L\} \subset \Labelc}$ be the discretization of the continuous label space $\Labelc$.
The functions $F: \Labeld \mapsto \mathbb{R}$, $G:\Labeld \times \Labeld \mapsto \mathbb{R}$ are given. Let $\vect{u}_\text{d} \in \Labeld$ be an arbitrary discrete labeling, and ${\Gamma}(\vect{u}_\text{d}) \subset \Labelc$ be its neighborhood (ball) in continuous label space. The energy value at the given point is equal to $E_\text{d} := F(\vect{u}_\text{d}) + G((\vect{u}_\text{d} \times \vect{u}_\text{d}))$. 

If the functions $\tilde{F}: \Gamma \mapsto \mathbb{R}$, $\tilde{G}:\Gamma \times \Gamma \mapsto \mathbb{R}$ are continuations of the discrete functions $F, G$ on continuous label range $\Gamma$, i.e. $\tilde{F}\big|_{\Labeld} = F\big|_\Gamma$,  $\tilde{G}\big|_{\Labeld} = G\big|_\Gamma$, then
\begin{align}
\min\limits_{\vect{u} \in \Gamma(\vect{u}_\text{d})}  \quad \tilde{F}(\vect{u})  + \tilde{G}((\vect{u} \times \vect{u})) = E^* \leq E_\text{d}  
 \label{eq:proposition_1}
\vspace{-2mm}
\end{align}
\end{theorem}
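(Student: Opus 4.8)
The plan is to produce a single feasible point of the left-hand minimization problem whose objective value equals exactly $E_\text{d}$; since $E^*$ is a \emph{minimum} over that feasible set, it can then only be $\le E_\text{d}$. Concretely I would argue in three short steps.

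\textbf{Step 1: the discrete labeling is feasible.} By construction the label range is $\Gamma(\vect{u}_\text{d}) = \Gamma_1 \times \cdots \times \Gamma_N$ with each $\Gamma_i$ a closed ball around the initial label $u^i_\text{d}$, so $u^i_\text{d} \in \Gamma_i$ for every $i \in \Omega$ and hence $\vect{u}_\text{d} \in \Gamma(\vect{u}_\text{d})$; in particular the feasible set is nonempty. Since each $\Gamma_i$ is compact and $\tilde F, \tilde G$ are continuous (the surrogates used in \eqref{eq:LP_local_relax_MRF}, \eqref{eq:QP_local_relax_MRF}, \eqref{subeq:Full_local_relax_constraint_1} are affine or convex quadratic, hence continuous on the compact range), the infimum on the left-hand side is attained, so $E^*$ is well defined.

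\textbf{Step 2: evaluate the continuous objective at $\vect{u}_\text{d}$.} Every component $u^i_\text{d}$ lies in $\Labeld \cap \Gamma_i$, so the continuation property $\tilde F\big|_{\Labeld} = F\big|_{\Labeld}$ (and likewise $\tilde G\big|_{\Labeld \times \Labeld} = G\big|_{\Labeld \times \Labeld}$) yields $\tilde F(\vect{u}_\text{d}) = F(\vect{u}_\text{d})$ and $\tilde G(\vect{u}_\text{d} \times \vect{u}_\text{d}) = G(\vect{u}_\text{d} \times \vect{u}_\text{d})$. Adding these two identities gives that the left-hand objective at the feasible point $\vect{u} = \vect{u}_\text{d}$ equals $F(\vect{u}_\text{d}) + G(\vect{u}_\text{d} \times \vect{u}_\text{d}) = E_\text{d}$.

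\textbf{Step 3: conclude by minimality.} The defining property of the minimum gives $E^* = \min_{\vect{u} \in \Gamma(\vect{u}_\text{d})}\, \tilde F(\vect{u}) + \tilde G(\vect{u} \times \vect{u}) \;\le\; \tilde F(\vect{u}_\text{d}) + \tilde G(\vect{u}_\text{d} \times \vect{u}_\text{d}) = E_\text{d}$, which is precisely \eqref{eq:proposition_1}.

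There is essentially no hard step: the whole statement is the observation that the discrete solution is retained as one admissible candidate of the continuous refinement. The only points requiring care are the bookkeeping that the neighborhoods $\Gamma_i$ are indeed chosen to contain $u^i_\text{d}$, and that the word "continuation'' really means a genuine extension that reproduces the tabulated discrete costs on the sampled labels lying inside $\Gamma$ — both hold by the definitions above, so no energy is lost in passing from the discrete model to the continuous one. I would also add a remark that the inequality is strict exactly when some point in the interior of $\Gamma(\vect{u}_\text{d})$ attains a lower value of the convex surrogate, which is precisely the sublabel-accurate improvement the method is designed to capture.
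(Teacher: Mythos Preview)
Your proof is correct and follows the same approach as the paper: show that $\vect{u}_\text{d}\in\Gamma(\vect{u}_\text{d})$, use the continuation property to get $\tilde F(\vect{u}_\text{d})+\tilde G(\vect{u}_\text{d}\times\vect{u}_\text{d})=E_\text{d}$, and conclude by minimality. The paper's version is simply terser, omitting your compactness/attainment remark and the closing comment on strictness.
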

\begin{proof}
Since $\vect{u}_\text{d} \in \Gamma(\vect{u}_\text{d})$, and ${\tilde{F}(\vect{u}_\text{d}) = {F}(\vect{u}_\text{d})}$, ${\tilde{G}((\vect{u}_\text{d} \times \vect{u}_\text{d})) = {G}((\vect{u}_\text{d} \times \vect{u}_\text{d}))}$, then the energy value $E_\text{d} = F(\vect{u}_\text{d}) + G((\vect{u}_\text{d} \times \vect{u}_\text{d}))$ is among the optimized set.
\end{proof}

In the proposed sublabel-accurate refinement methods the optimized set of energy functions enlarges to the continuous domain and always contains the original discrete labeling-energy points on the chosen label ranges, which leads to Prop.~\ref{proposition:optimality_preservation}. 
{
\begingroup
\setlength{\tabcolsep}{4pt} 
\begin{table}[t]
    \centering
        \footnotesize
    \begin{tabular}{@{}lllll@{}}
\toprule 
$|\Label|$ &
  \makecell{Sublabel, \\ Pock} &
  \makecell{Sublabel, \\ Mollenhoff} &
  \makecell{Sublabel,  \\ exact + QL} &
  \makecell{Sublabel, \textbf{ours} \\ GCO + QL } \\ \midrule
5   & 6022  (1.4 s)               & \textbf{5079}  (7.5 s)               & {\ul 5684}  (10.1 s) & 5698  (8.1 s)  \\
10  & 4820  (5.3 s)               & 4557  (116.3 s)             & \textbf{4547}  (8.6 s)  & {\ul 4550}  (7.8 s)  \\
15  & 4613  (15.2 s)              & 4509  (181.2 s)             & \textbf{4469}  (9 s)    & {\ul 4473}  (7.8 s)  \\
20  & 4545  (22.2 s)              & 4494  (266.6 s)             & \textbf{4462}  (11 s)   & {\ul 4466}  (8.4 s)  \\
30  & 4496  (18.8 s)              & 4479  (438.4 s)             & \textbf{4461}  (16 s)   & {\ul 4464}  (8.3 s)  \\
40  & 4481  (28.3 s)              & {4478}  (663.9 s) & \textbf{4461}  (25.1 s) & {\ul 4464}  (9 s)    \\
50 &
  4473  (39.4 s) &
  {4480}  (918.9 s) &
  \textbf{4460}  (38.2 s) &
  {\ul 4463}  (9.3 s) \\
100 & {\ul 4465}  (114.7 s)             & 4490  (1794.3 s)            & OOM            & \textbf{4464}  (11.9 s) \\
150 & {\ul 4464}  (301.8 s)             & TLE                         & OOM            & \textbf{4463}  (17.1 s) \\
200 & {\ul 4465}  (368.4 s)             & TLE                         & OOM            & \textbf{4463}  (15.1 s) \\
256 & {\ul 4466}  (620.4 s)             & TLE                         & OOM            & \textbf{4464}  (21.1 s) \\
\bottomrule
\end{tabular}
    \caption{
    Comparison of sublabel-accurate methods in terms of the accuracy ${E = E_{\text{data}} + E_{\text{smooth}}}$  and efficiency (elapsed time) for robust image denoising. 
    The \textbf{bold} numbers represent the smallest energies for the given number of labels $|\Label|$ in each row, \underline{underlined} numbers the second smallest. 
    Initialized by the efficient $\alpha$-expansion algorithm our sublabel-accurate method (GCO + QL) achieves almost optimal energies without suffering from memory (OOM -- out of memory) and time (TLE -- time limit exceeded) issues. {Our proposed method is computationally more efficient than previous sublabel-accurate approaches~\cite{Pock2010, Mollenhoff2016}, but produces comparable or even better results, and scales best to large label spaces.}
}
\label{table:image_denoising}
\end{table}
\endgroup
}

\section{Experiments}

{
\begingroup
\setlength{\tabcolsep}{3.5pt} 
\begin{SCtable*}[][t]
\centering
\footnotesize
\begin{tabular}{@{}lllllll@{}}
\toprule
$|\Label|$ &
  \makecell{Discrete, \\exact} &
  \makecell{Discrete, \\GCO} &
  \makecell{Discretized,  \\ exact + QL} &
  \makecell{Discretized,  \\ GCO+ QL} &
  \makecell{Discretized, \\ Pock} &
  \makecell{Discretized, \\ Mollenhoff} \\ \midrule
5   & \textbf{6023}  & {\ul 6034}  & \textbf{6023}  (5.6\%) & {\ul 6034}  (5.6\%) & \textbf{6023}  (0.02\%)          & 6058  (16.2\%) \\
10  & \textbf{4821}  & {\ul 4823}  & \textbf{4821}  (5.7\%) & {\ul 4823}  (5.7\%) & \textbf{4821}  (0.02\%)          & 4874  (6.5\%)  \\
15  & \textbf{4614}  & {\ul 4618}  & \textbf{4614}  (3.1\%) & {\ul 4618}  (3.1\%) & \textbf{4614}  (0.02\%)          & 4663  (3.3\%)  \\
20  & \textbf{4546}  & {\ul 4550}  & \textbf{4546}  (1.9\%) & {\ul 4550}  (1.9\%) & \textbf{4546}  (0.02\%)          & 4584  (1.9\%)  \\
30  & \textbf{4496}  & {\ul 4499}  & \textbf{4496}  (0.8\%) & {\ul 4499}  (0.8\%) & \textit{\textbf{4497}}  (0.02\%) & 4512  (0.7\%)  \\
40  & \textbf{4481}  & {\ul 4484}  & \textbf{4481}  (0.5\%) & {\ul 4484}  (0.5\%) & \textit{\textbf{4482}}  (0.02\%) & 4498  (0.4\%)  \\
50  & \textbf{4473}  & {\ul 4476}  & \textbf{4473}  (0.3\%) & {\ul 4476}  (0.3\%) & \textit{\textbf{4474}}  (0.02\%) & 4493  (0.3\%)  \\
100 & OOM            & {\ul 4467}  & OOM                     & {\ul 4467}  (0.07\%) & \textit{\textbf{4466}}  (0.02\%) & 4493  (0.07\%)  \\
150 & OOM            & {\ul 4465}  & OOM                     & {\ul 4465}  (0.04\%) & \textit{\textbf{4464}}  (0.0\%) & TLE             \\
200 & OOM            & {\ul 4464}  & OOM                     & {\ul 4464}  (0.02\%) & 4465  (0.0\%)                   & TLE             \\
256 & OOM            & {\ul 4464}  & OOM                     & {\ul 4464}  (0.0\%) & 4466  (0.0\%)                   & TLE              \\ \bottomrule
\end{tabular}
\caption{Energies of \emph{discretized} sublabel-accurate solutions and of discrete methods. 
Our sublabel-accurate refinement (QL) preserves the optimality of a discrete initialization, while at the same time improves the results to sublabel accuracy {(see relative improvement of energy due to sublabel-accuracy in parentheses).
The method by Pock et al. also shows discrete optimality, but it does not find better sublabel-accurate solutions (0.02\% improvement). Mollenhoff's approach \cite{Mollenhoff2016} does not provide any optimality, including discrete.}
} 
\label{table:discretized_energies}
\end{SCtable*}
\endgroup
}

We experimentally evaluate our method on the problem of robust image denoising with a non-convex data term. Moreover, we examine the optimality of sublabel-accurate methods. Finally, we demonstrate the applicability of our approach to {generic} non-convex pairwise terms. 
We compare our method against various existing approaches, including
\begin{itemize}
    \item the approximately optimal (for metric priors) and efficient discrete method (`GCO') \citet{Boykov2001},
    \item {the globally optimal (for convex priors) discrete method (`exact') \citet{schlesinger2006transforming},}
    \item {the} globally optimal (for convex priors) sublabel-accurate method by Pock et al.~\citet{Pock2010}, and
    \item the sublabel-accurate method
 by {Mollenhoff et al.~\citet{Mollenhoff2016}}. 
\end{itemize}
Since our proposed approach allows arbitrary discrete initializers, we demonstrate the sublabel-accurate refinement on both `exact' and `GCO` solutions. However, since we aim for efficiency together with optimality, we chose the $\alpha$-expansion algorithm (GCO) for the discrete optimization step as a default initialization method. The continuous convex optimization is implemented using the YALMIP
library~\cite{Lofberg2004}. Overall, the {(GCO+QL)} is the default combination for our approach, if not specified otherwise. All the experiments were run on a 4-core Intel Core i7-7700HQ CPU running at 2.80GHz clock speed with 16 GB RAM. 
A single Nvidia GeForce GTX 1050 GPU with 4GB of memory was used for reproducing the fully continuous approaches by \citet{Pock2010} and \citet{Mollenhoff2016}.

\subsection{Image denoising}
\label{subsec:image_denoising}
We would like to show the properties of the proposed method on the problem of image denoising. This problem is a common benchmark for MRFs, and it allows for the use of continuous label space $\Labelc$, which represents the pixel intensity.
We first consider a non-convex robust data term and a convex pairwise term, so that the problem can be solved to the global optimality~\citet{schlesinger2006transforming}.
For the data term we use robust truncated quadratic costs
\begin{equation}
\label{eq:trunc_rof}
    E_{i}(u^i) = \frac{\beta}{2} \min \left\{(u^i - f(x_i))^2, \nu \right\}\,,
\end{equation}
and for the pairwise term we use the L1 total variation {(L1-TV)} with weight ${\lambda=0.6}$ given by
\begin{equation}
\label{eq:l1_TV}
    E_{ij}(u^i, u^j) = \lambda  |u^i - u^j|.
\end{equation}

We generate a denoising problem instance by degrading the input image 
using both additive Gaussian noise ${\mathcal{N}(0, 0.05)}$ and a salt and pepper noise ($p=0.25$) at the same time. The parameters in \eqref{eq:trunc_rof} were chosen as $\beta = 25, \nu = 0.025$, following the protocol from previous work~\cite{Mollenhoff2016}. 

Our method (GCO+QL) achieves almost optimal sublabel-accurate results with a small number of labels, comparable to previous fully continuous methods (see Fig.~\ref{fig:teaser}, Table~\ref{table:image_denoising}), but at significantly reduced time. The previous sublabel-accurate method by Mollenhoff et al.~\citet{Mollenhoff2016} requires a very large 
a number of iterations to converge in the case of finer label space discretization. 
In contrast to previous sublabel-accurate methods, our approach (GCO+QL) scales to large label spaces with fine discretization while having the same time complexity w.r.t. the number of labels as the initial discrete optimizer (see top-right plot in Fig.~\ref{fig:teaser}). In addition, due to the flexibility of the discrete initialization, our approach with the exact discrete solution (exact+QL) finds the lowest sublabel-accurate energy (\textbf{4460}) among all the methods. However, the memory costs of the exact discrete solver (discussed above) do not allow its usage for the larger number of labels.

\subsection{Experimental optimality preservation}
\label{subsec:discrete_optimality}
Since the problems with submodular pairwise terms can be solved exactly with {the} discrete solver~\cite{schlesinger2006transforming, Ishikawa2003}, we can estimate the optimality of {all} sublabel-accurate methods on the previous image denoising problem with a convex prior. Since in our problem formulation, the energy values $E_i, E_{ij}$ are not given in-between labels $\Labeld$, its various approximations are equally possible. So we can only evaluate the unique global optimum for the discrete formulation.
To compare sublabel-accurate methods with the discrete optimum, we propose to compare the energies of the \emph{discretized} sublabel-accurate solutions.
For this, we round {the sublabel-accurate solutions} $\vect{u} \in \Labelc^N$ to the discrete label space $\Labeld$, analogous to the real-world procedure of representing images with a fixed color depth (bits per pixel). 

In Table~\ref{table:discretized_energies} we report energies after discretizing sublabel-accurate solutions together with optimal (exact) and suboptimal (GCO) discrete solutions. In parentheses (5.6\%) we give the relative improvement of energy due to the sublabel-accuracy of a given method (see the corresponding sublabel-accurate energies in Table~\ref{table:image_denoising}).
The linear method by Pock~\citet{Pock2010} shows the optimal results {after discretization} (up to minor numerical errors). However, it does not find noticeably better sublabel-accurate solutions (0.02\% improvement). The reason is that {their} interpolation between discrete labels is linear, which is known to cause discrete label bias. The convex relaxation method by Mollenhoff~\citet{Mollenhoff2016} uses quadratic data term approximation, therefore, finds smooth sublabel-accurate solutions. However, its discretization is far from optimal, as this method does not provide any theoretical optimality guarantees. The issue with the absence of experimental optimality in the case of convex priors (such as total variance) and {a} non-linear data term approximation was also reported in the sublabel-accurate discrete-continuous method by Zack et al.~\citet{Zacha}. 

In contrast, our sublabel-accurate solution preserves the optimality of the original discrete solution. It can be observed by comparing the corresponding columns of the Table~\ref{table:discretized_energies}. More importantly, our sublabel refinement also leads to a better sublabel-accurate solution, and the improvement is larger for the coarser discretization of the label space.

\begin{figure}[t]
\centering
\captionsetup[subfigure]{justification=centering}
\footnotesize
\begin{tabular*}{\columnwidth}{p{17mm}p{17mm}p{17mm}p{17mm}}
    \begin{subfigure}[t]{0.23\columnwidth}\centering
    \includegraphics[width=\columnwidth]{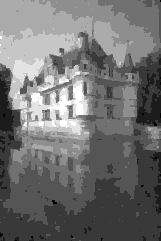}
    \caption*{GCO, ${k{=}1}$ \\ PSNR = 23.93 }
    \end{subfigure} &
    \begin{subfigure}[t]{0.23\columnwidth}\centering
    \includegraphics[width=\columnwidth]{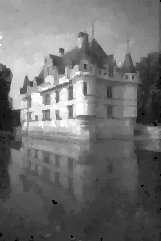}
    \caption*{Ours QM,\,${k{=}1}$ \\ PSNR = 25.13 }
    \end{subfigure} &
    \begin{subfigure}[t]{0.23\columnwidth}\centering
    \includegraphics[width=\columnwidth]{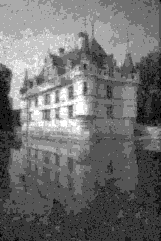}
    \caption*{GCO, ${k{=}2}$ \\ PSNR = 23.43 }
    \end{subfigure} &
    \begin{subfigure}[t]{0.23\columnwidth}\centering
    \includegraphics[width=\columnwidth]{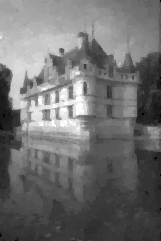}
    \caption*{Ours QL, ${k{=}2}$ \\ PSNR = 25.59 }
    \end{subfigure} 
\end{tabular*}
\caption{Results obtained by GCO and by our method with the non-convex priors  \eqref{eq:truncated_prior} for  image denoising ($|\Label|$=10). While the methods by \citet{Pock2010} and \citet{Mollenhoff2016} are not applicable in this setting, our method is flexible as it can handle non-convex priors.
} 
\label{fig:non_conv_priors}
\end{figure}

\subsection{Flexible non-convex priors}
\label{subsec:image_denoising_non_conv_priors}
As a proof of concept, we also show the applicability and efficiency of our method to arbitrary pairwise terms. In contrast, the previous sublabel-accurate methods \citet{Pock2010,Mollenhoff2016} require convex pairwise costs, {and are thus not applicable in this setting}. 
Our method with the marginalization constraint (QM) is flexible to the choice of pairwise cost, is computationally efficient, and, unlike the approach by~\citet{Zacha}, requires only a constant number of variables per node and edge.

The use of non-convex priors for image denoising is shown in Fig.~\ref{fig:non_conv_priors}. Here, we consider  non-convex truncated linear and truncated quadratic costs, i.e. 
\begin{equation}
\label{eq:truncated_prior}
    {E_{ij}(\ell, m) = \lambda \cdot \min(|\ell - m|^k, T),} \quad k \in \{1, 2\}.
\end{equation}
Parameters for truncated linear cost are ${T=0.6}$, ${\lambda=0.6}${, $k=1$}, and for truncated quadratic ${T=0.7}$, ${\lambda=3}${, $k=2$}. 
Due to the locality of the approximation, our local linear pairwise term approximation (QL) also produces reasonable results with truncated quadratic costs (Fig.~\ref{fig:non_conv_priors}, last column). However, to supply the optimality guarantees, the corresponding discrete solvers, such as~\cite{Veksler2007,Ajanthan_2015_CVPR}, that can deal with non-convex priors, must be used. Our sublabel refinement in \eqref{eq:QP_local_relax_MRF} formulation will preserve the optimality and will produce sublabel-accurate solutions.

\section{Future Work}
Since our method relies on a local refinement, it is dependent on the initialization.
Finding a trade-off between the initial discretization of the label space and the optimality of the results is an open research problem and an interesting direction for future work. Generalizing our approach to vector- and manifold-valued label spaces, e.g.~for 3D shape-to-image matching~\cite{Bernard2017a}, is a promising future direction.

\section{Conclusion}
We proposed an efficient sublabel-accurate method for energy minimization problems in the form of continuous-valued MRFs. 
We demonstrated that our method scales best to large label spaces with fine discretization, preserves the optimality achieved by the discrete model, and at the same time refines the solution based on sublabel-accurate labelings. Moreover, we showcased the flexibility of our approach regarding the choice of both data and pairwise terms, which makes it applicable to a wide range of settings.

\bibliographystyle{IEEEtran}
\bibliography{egbib}

\begin{thebibliography}{10}
\providecommand{\url}[1]{#1}
\csname url@samestyle\endcsname
\providecommand{\newblock}{\relax}
\providecommand{\bibinfo}[2]{#2}
\providecommand{\BIBentrySTDinterwordspacing}{\spaceskip=0pt\relax}
\providecommand{\BIBentryALTinterwordstretchfactor}{4}
\providecommand{\BIBentryALTinterwordspacing}{\spaceskip=\fontdimen2\font plus
\BIBentryALTinterwordstretchfactor\fontdimen3\font minus
  \fontdimen4\font\relax}
\providecommand{\BIBforeignlanguage}[2]{{%
\expandafter\ifx\csname l@#1\endcsname\relax
\typeout{** WARNING: IEEEtran.bst: No hyphenation pattern has been}%
\typeout{** loaded for the language `#1'. Using the pattern for}%
\typeout{** the default language instead.}%
\else
\language=\csname l@#1\endcsname
\fi
#2}}
\providecommand{\BIBdecl}{\relax}
\BIBdecl

\bibitem{Pock2010}
T.~Pock, D.~Cremers, H.~Bischof, and A.~Chambolle, ``Global solutions of
  variational models with convex regularization,'' \emph{SIAM Journal on
  Imaging Sciences}, 2010.

\bibitem{Mollenhoff2016}
T.~Mollenhoff, E.~Laude, M.~Moeller, J.~Lellmann, and D.~Cremers,
  ``Sublabel-accurate relaxation of nonconvex energies,'' in \emph{Conference
  on Computer Vision and Pattern Recognition}, 2016.

\bibitem{Boykov2001}
Y.~Boykov, O.~Veksler, and R.~Zabih, ``Fast approximate energy minimization via
  graph cuts,'' \emph{IEEE Transactions on Pattern Analysis and Machine
  Intelligence}, 2001.

\bibitem{komodakis2006image}
N.~Komodakis, ``Image completion using global optimization,'' in
  \emph{Conference on Computer Vision and Pattern Recognition}, 2006.

\bibitem{roth2005fields}
S.~Roth and M.~J. Black, ``Fields of experts: A framework for learning image
  priors,'' in \emph{Conference on Computer Vision and Pattern Recognition},
  2005.

\bibitem{kolmogorov2001computing}
V.~Kolmogorov and R.~Zabih, ``Computing visual correspondence with occlusions
  using graph cuts,'' in \emph{International Conference on Computer Vision},
  2001.

\bibitem{kolmogorov2002multi}
------, ``Multi-camera scene reconstruction via graph cuts,'' in \emph{European
  Conference on Computer Vision}, 2002.

\bibitem{Glocker2008}
B.~Glocker, N.~Paragios, N.~Komodakis, G.~Tziritas, and N.~Navab, ``Optical
  flow estimation with uncertainties through dynamic mrfs,'' in
  \emph{Conference on Computer Vision and Pattern Recognition}, 2008.

\bibitem{Bernard2017a}
F.~Bernard, F.~R. Schmidt, J.~Thunberg, and D.~Cremers, ``A combinatorial
  solution to non-rigid 3d shape-to-image matching,'' in \emph{Conference on
  Computer Vision and Pattern Recognition}, 2017.

\bibitem{paulsen2009markov}
R.~R. Paulsen, J.~A. B{\ae}rentzen, and R.~Larsen, ``Markov random field
  surface reconstruction,'' \emph{IEEE Transactions on Visualization and
  Computer Graphics}, 2009.

\bibitem{hinton2005learning}
G.~E. Hinton, S.~Osindero, and K.~Bao, ``Learning causally linked markov random
  fields.'' in \emph{International Conference on Artificial Intelligence and
  Statistics}, 2005.

\bibitem{xiong2020general}
H.~Xiong and N.~Ruozzi, ``General purpose mrf learning with neural network
  potentials,'' in \emph{International Joint Conferences on Artificial
  Intelligence}, 2020.

\bibitem{nguyen2019fake}
D.~M. Nguyen, T.~H. Do, R.~Calderbank, and N.~Deligiannis, ``Fake news
  detection using deep {M}arkov random fields,'' in \emph{Conference of the
  North American Chapter of the Association for Computational Linguistics:
  Human Language Technologies}, 2019.

\bibitem{Yedidia2001NIPS}
J.~S. Yedidia, W.~T. Freeman, Y.~Weiss \emph{et~al.}, ``Generalized belief
  propagation,'' in \emph{Conference on Neural Information Processing Systems},
  2000.

\bibitem{Weiss2001}
Y.~Weiss and W.~T. Freeman, ``Correctness of belief propagation in gaussian
  graphical models of arbitrary topology,'' \emph{Neural computation}, 2001.

\bibitem{Kannan2020}
H.~Kannan, N.~Komodakis, and N.~Paragios, ``Tighter continuous relaxations for
  map inference in discrete mrfs: A survey,'' in \emph{Handbook of Numerical
  Analysis}.\hskip 1em plus 0.5em minus 0.4em\relax Elsevier, 2019.

\bibitem{Kolmogorov2004}
V.~Kolmogorov and R.~Zabin, ``What energy functions can be minimized via graph
  cuts?'' \emph{IEEE Transactions on Pattern Analysis and Machine
  Intelligence}, 2004.

\bibitem{SHIMONY1994399}
S.~E. Shimony, ``Finding maps for belief networks is np-hard,''
  \emph{Artificial intelligence}, 1994.

\bibitem{Li2016ECCV}
M.~Li, A.~Shekhovtsov, and D.~Huber, ``Complexity of discrete energy
  minimization problems,'' in \emph{European Conference on Computer Vision},
  2016.

\bibitem{Ishikawa2003}
H.~Ishikawa, ``Exact optimization for markov random fields with convex
  priors,'' \emph{IEEE Transactions on Pattern Analysis and Machine
  Intelligence}, 2003.

\bibitem{schlesinger2006transforming}
D.~Schlesinger and B.~Flach, ``Transforming an arbitrary minsum problem into a
  binary one,'' TU Dresden, Tech. Rep., 2006.

\bibitem{Veksler2007}
O.~Veksler, ``Graph cut based optimization for mrfs with truncated convex
  priors,'' in \emph{Conference on Computer Vision and Pattern Recognition},
  2007.

\bibitem{Komodakis2008}
N.~Komodakis, G.~Tziritas, and N.~Paragios, ``Performance vs computational
  efficiency for optimizing single and dynamic mrfs: Setting the state of the
  art with primal-dual strategies,'' \emph{Computer Vision and Image
  Understanding}, 2008.

\bibitem{Szeliski2006}
R.~Szeliski, R.~Zabih, D.~Scharstein, O.~Veksler, V.~Kolmogorov, A.~Agarwala,
  M.~Tappen, and C.~Rother, ``A comparative study of energy minimization
  methods for markov random fields,'' in \emph{European Conference on Computer
  Vision}, 2006.

\bibitem{Nieuwenhuis2013}
C.~Nieuwenhuis, E.~T{\"o}ppe, and D.~Cremers, ``A survey and comparison of
  discrete and continuous multi-label optimization approaches for the potts
  model,'' \emph{International Journal of Computer Vision}, 2013.

\bibitem{Ajanthan_2015_CVPR}
T.~Ajanthan, R.~Hartley, M.~Salzmann, and H.~Li, ``Iteratively reweighted graph
  cut for multi-label mrfs with non-convex priors,'' in \emph{Conference on
  Computer Vision and Pattern Recognition}, 2015.

\bibitem{Laude2016}
E.~Laude, T.~M{\"o}llenhoff, M.~Moeller, J.~Lellmann, and D.~Cremers,
  ``Sublabel-accurate convex relaxation of vectorial multilabel energies,'' in
  \emph{European Conference on Computer Vision}, 2016.

\bibitem{strecke2018sublabel}
M.~Strecke and B.~Goldluecke, ``Sublabel-accurate convex relaxation with total
  generalized variation regularization,'' in \emph{DAGM German Conference on
  Pattern Recognition}, 2018.

\bibitem{Mollenhoff2017}
T.~Mollenhoff and D.~Cremers, ``Sublabel-accurate discretization of nonconvex
  free-discontinuity problems,'' in \emph{International Conference on Computer
  Vision}, 2017.

\bibitem{Zacha}
C.~Zach and P.~Kohli, ``A convex discrete-continuous approach for markov random
  fields,'' in \emph{European Conference on Computer Vision}, 2012.

\bibitem{Zach2017}
C.~Zach, ``Generalized fusion moves for continuous label optimization,''
  \emph{Computer Vision and Image Understanding}, 2018.

\bibitem{Fix2014}
A.~Fix and S.~Agarwal, ``Duality and the continuous graphical model,'' in
  \emph{European Conference on Computer Vision}, 2014.

\bibitem{Savchynskyy2020}
B.~Savchynskyy, ``Discrete graphical models -- an optimization perspective,''
  \emph{Foundations and Trends in Computer Graphics and Vision}, 2019.

\bibitem{boyd2004convex}
S.~Boyd and L.~Vandenberghe, \emph{Convex Optimization}.\hskip 1em plus 0.5em
  minus 0.4em\relax Cambridge University Press, 2004.

\bibitem{Lofberg2004}
J.~L{\"{o}}fberg, ``{YALMIP: A toolbox for modeling and optimization in
  MATLAB},'' in \emph{IEEE International Symposium on Computer-Aided Control
  System Design}, 2004.

\end{thebibliography}
\end{document}